\documentclass[letterpaper, 10 pt, conference]{IEEEconf}
\IEEEoverridecommandlockouts

\usepackage{microtype}
\usepackage{amsmath,amssymb,amsfonts,bm,mathtools}
\usepackage{mathrsfs}
\usepackage{graphicx}
\usepackage[usenames,dvipsnames]{xcolor}
\usepackage[colorlinks=true,linkcolor=MidnightBlue,citecolor=OliveGreen,urlcolor=RoyalBlue]{hyperref}
\usepackage[nameinlink]{cleveref}
\usepackage{comment}

\usepackage{amsmath, amssymb}   
\usepackage{algorithm}         
\usepackage{algpseudocode}     

\newtheorem{theorem}{Theorem}
\newtheorem{lemma}[theorem]{Lemma}

\newcommand{\reals}{\mathbb{R}}
\newcommand{\E}{\mathbb{E}}

\newcommand{\1}{\mathbf{1}}

\newcommand{\CE}{\mathbb{E}}    

\newcommand{\CF}{\mathcal{F}}

\newcommand{\CL}{\mathcal{L}}

\newcommand{\skor}{\mathcal{S}}

\newcommand{\ones}{\mathbf{1}}

\renewcommand{\leq}{\leqslant}
\renewcommand{\geq}{\geqslant}
\renewcommand{\epsilon}{\varepsilon}
\renewcommand{\phi}{\varphi}
\renewcommand{\th}{\theta}

\newcommand{\loss}{\ell}
\newcommand{\Xt}{X^\th}
\newcommand{\Loss}{L}

\begin{document}

\title{\LARGE \bf Malliavin Calculus for Counterfactual Gradient Estimation in Adaptive Inverse Reinforcement Learning}

\author{{Vikram Krishnamurthy},  {Luke Snow}
\thanks{This research was supported in part by the National Science Foundation grants  CCF-2312198 and  Army Research Office grant  W911NF-24-1-0083.}
\thanks{Luke Snow {\tt\small las474@cornell.edu}, and Vikram Krishnamurthy {\tt\small vikramk@cornell.edu} are  with the School of Electrical and Computer Engineering, Cornell University, Ithaca, NY 14853, USA
        }%
}

\maketitle

\begin{abstract}
Inverse reinforcement learning (IRL) recovers the loss function of a forward learner from its observed responses. Adaptive IRL aims to reconstruct the loss function of a forward learner by passively observing its gradients as it performs reinforcement learning (RL). This paper proposes a novel passive Langevin-based algorithm that achieves adaptive IRL. The key difficulty in adaptive IRL is that the required gradients in the passive algorithm are counterfactual: they are conditioned on events of probability zero under the forward learner’s trajectory. Therefore, naive Monte Carlo estimators are prohibitively inefficient, and kernel smoothing, though common, suffers from slow convergence. We overcome this by employing Malliavin calculus to efficiently estimate the required counterfactual gradients; we reformulate the counterfactual conditioning as a ratio of unconditioned expectations involving Malliavin quantities, thus recovering standard estimation rates. We derive the necessary Malliavin derivatives and their adjoint Skorohod integral formulations for a general Langevin structure, and provide a concrete algorithmic approach which exploits these for counterfactual gradient estimation. 


\end{abstract}

\section{Introduction}
Inverse reinforcement learning (IRL) aims to estimate the loss function of optimizing agents by observing their actions (estimates). Classical IRL is off-line: given a data set of actions chosen according to the optimal policy of a Markov decision process, \cite{NR00} formulated a set of inequalities that the loss function must satisfy. This paper constructs and analyzes an adaptive (real time) IRL algorithm by observing optimizing agents that are performing real time reinforcement learning (RL). The problem we consider is this: Suppose we observe estimates of multiple (randomly initialized) stochastic gradient algorithms (reinforcement learners) that  maximize a (possibly non-convex) loss. How to design another stochastic gradient algorithm (inverse learner) to estimate the loss function?


Existing techniques for adaptive IRL \cite{KY21,KY22,SK25} incorporate the agent's learning dynamics through a passive \emph{kernel-based} passive Langevin dynamics, which converges asymptotically to a stationary measure encoding the loss function $\Loss$ to be learned; this allows for adaptive IRL through Markov chain Monte Carlo (MCMC) sampling of the stationary measure of this \emph{passive} Langevin chain. Similar passive schemes and stochastic approximation analyses have been investigated in \cite{KY22}, \cite{YY96}, \cite{KY03}, \cite{NPT89}. 

In this work we propose an enhanced alternative to these kernel-based passive Langevin algorithms: we derive a Malliavin calculus-based counterfactual gradient estimate which replaces the passive kernel. This allows us to bypass kernel bandwidth scale limitations and compute the necessary Langevin gradient directly from the \emph{mis-specified} dynamics of the learning agent. In this sense we evaluate a \emph{counterfactual} gradient\footnote{Counterfactual gradient estimation asks: Given sample paths of $X_t$ that we cannot control, how can we evaluate $\nabla\Loss(X_t)$ under the hypothetical condition that the sample paths $X_t$ pass through specific point $\alpha$ \cite{KS25}.}: we observe only gradient dynamics from the agent's learning trajectory and we compute from this the loss gradient at a \emph{distinct} point which is not observed in the trajectory. This counterfactual estimate is formulated as an expected gradient conditioned on the measure-zero event that the learning trajectory passes through our distinct evaluation point; this conditioning prohibits classical Monte Carlo since the conditioning event has zero probability. We exploit tools from Malliavin calculus to reformulate this conditional expected gradient as a ratio of \emph{unconditioned} expectations involving Malliavin derivatives and their adjoint Skorohod integrals of the observed learning dynamics; thus, recovering classical Monte Carlo estimation rates even in this rare or measure-zero event regime. This efficient reformulation is enabled by \emph{exploitation} of the structure of the learning dynamics through the Malliavin quantities. 

This Malliavin-based gradient estimator then enables adaptive IRL by replacing the kernel-based Langevin gradient in existing techniques \cite{KY21,SK25}. Compared to sequential MCMC approaches, which suffer from particle degeneracy, require large ensembles, and exhibit variance explosion over long time horizons, the Malliavin calculus method of \cite{BET04} yields unbiased Monte Carlo estimators for the counterfactual conditional expectations that achieve the optimal $\sqrt{N}$ convergence rate independent trajectories of the Langevin dynamics, without resampling or kernel smoothing.

To give additional context, estimating a loss function given the response of agents is studied in the area of revealed preferences in microeconomics  \cite{Afr72,WD12}. In comparison, the current paper uses misspecified  gradients to recover the loss function  in real time via a  Langevin  algorithm.



The paper is organized as follows:  Section~\ref{sec:ad_IRL}  introduces our approach to adaptive IRL.  Section~\ref{sec:mall_intro}  provides an introduction to Malliavin calculus, and presents the key conditional expectation reformulation which we exploit to achieve adaptive IRL. In Section~\ref{sec:mall_pIRL} we derive the full Malliavin-based algorithmic approach to adaptive IRL.  Section~\ref{sec:numerical_example}  provides a numerical implementation of our algorithm, demonstrating effective counterfactual gradient computation and recovery of the forward learner's loss function.

\section{Forward and Inverse Learner Models}
\label{sec:ad_IRL}

We describe the adaptive IRL problem from two points of view: The forward learner that comprises multiple agents performing reinforcement learning (RL). These agents act sequentially to perform RL by using stochastic gradient
algorithms to minimize a loss function.
Second,  we discuss the inverse learner that deploys a passive Langevin dynamics algorithm. We also motivate the counterfactual gradient estimation problem for the inverse learner. 



\subsection{Forward Learning Dynamics}
We model the forward learner as a discrete stochastic gradient Langevin dynamics (SGLD) with episodic re-initialization, as in \cite{KY21,SK25}. Let
\[
0=\tau_0<\tau_1<\tau_2<\cdots
\]
denote stopping times, and suppose that at the beginning of each episode,
\[
X_{\tau_i}\sim \pi_0.
\]
For \(k\in\{\tau_i,\tau_i+1,\dots,\tau_{i+1}-1\}\), let $X_k\in\reals^d$ follow
\begin{align}
\begin{split}
\label{eq:fwd_lrn}
    X_{k+1}
&=
X_k-\varepsilon\,\widehat{\nabla}\Loss(X_k)
+\sqrt{2\varepsilon}\,\xi_{k+1},\\
\xi_{k+1}&\sim \mathcal N(0,I),
\end{split}
\end{align}
where \(\widehat{\nabla}\Loss(X_k)\) is a noisy gradient estimate. This is an Euler-Maruyama discretization of the continuous Langevin diffusion. Indeed, under the standard two-time-scale averaging assumptions [revise. cite:JMLR], the piecewise-constant interpolation of \((X_k)\) converges weakly, as \(\varepsilon\to 0\), to the Langevin diffusion
\begin{equation}
\label{eq:langevin}
dX_t=-\nabla \Loss(X_t)\,dt+\sqrt{2}\,dW_t,
\qquad t\in[\tau_i,\tau_{i+1}),
\end{equation}
with re-initialization \(\theta_{\tau_i}\sim \pi_0\) at each stopping time, and where $W$ denotes $d$-dimensional standard Brownian motion. Thus, for the remainder we consider \eqref{eq:langevin} as the continuous-time forward learning process, capturing the asymptotic slow time-scale behavior of its discrete SGLD recursion. 

The forward learner is designed to generate informative gradient samples across the state space rather than solely to converge to a minimizer of $\Loss$. To this end, we employ gradient-based dynamics augmented with injected Langevin noise and random restarts. These mechanisms serve complementary roles: local stochastic perturbations regularize the dynamics and prevent collapse onto deterministic trajectories, while restarts provide occasional global reinitialization, ensuring sustained exploration over long horizons. This allows for  global nonconvex optimization \cite{RRT17a} and Bayesian posterior sampling \cite{WT11}, for instance.

\subsection{Inverse Learning Dynamics}
Next, 
we  adopt the perspective of the \emph{inverse learner}. The inverse learner observes the forward process
\eqref{eq:langevin}  passively: the trajectory of the forward learner is not controlled, but instead the inverse learner seeks to reconstruct the unknown loss function $L(\cdot)$
from samples provided by the forward learner. The goal of the inverse learner is not merely to estimate the loss
$L$ at a single point, but to recover the loss function $L(\alpha)$ over the entire state
space.

The key idea is to estimate, for any chosen query point $\alpha \in \mathbb{R}^d$,
the gradient $\hat{\nabla} L(\alpha)$ using only the observed forward trajectories.
Once such pointwise gradient estimates are available, we may construct an
\emph{inverse} Langevin dynamics on the variable $\alpha$:
\begin{equation}
\label{eq:inv_langevin}
d\alpha_t = -\hat{\nabla} L(\alpha_t)\,dt + \sqrt{2\beta^{-1}}\,dW_t,
\end{equation}
where $\beta > 0$ is an inverse-temperature parameter chosen by the designer.

Under standard conditions (see Theorem~\ref{thm:convg}), the diffusion \eqref{eq:inv_langevin} admits the
Gibbs measure
\begin{equation}
\pi_\infty(\cdot) \propto \exp\bigl(-\beta L(\cdot)\bigr)
\label{eq:gibbs_measure}
\end{equation}
as invariant distribution. Therefore, if we can simulate
\eqref{eq:inv_langevin}, or a consistent discretization of it, using
estimated gradients, then samples from its stationary regime encode the unknown
loss function itself. In particular, 
\begin{equation}
L(\alpha) \propto -\frac{1}{\beta}\log \pi_\infty(\alpha) .
\label{eq:recover_L_from_gibbs}
\end{equation}


\subsection{Estimating the Counterfactual Gradient}

The central difficulty for the inverse learner is that the gradient needed in
\eqref{eq:inv_langevin} is \emph{counterfactual}. The forward learner
generates trajectories $(X_t)_{t\geq 0}$ according to
\eqref{eq:langevin}, but the inverse learner requires the value of
$\nabla L(\alpha)$ at a designer-chosen point $\alpha$, which need not lie on
the observed trajectory.

To implement the adaptive IRL, we will compute 
\begin{equation}
\label{eq:loss_grad}
\nabla L(\alpha) = \E[\nabla L(X_s)\mid X_s=\alpha].
\end{equation}
for some fixed evaluation time $s \in [0,\tau]$. This equation can be re-expressed as the ratio 
\begin{equation}
\label{eq:dirac_ratio}
\nabla\Loss(\alpha) = \frac{\CE[\nabla\Loss(X_s)\delta(X_s-\alpha)]}{\CE[\delta(X_s-\alpha)]}
\end{equation}
where $\delta$ is the Dirac delta function. 


However, the event $\{X_s=\alpha\}$ has probability zero, so direct conditional
Monte Carlo is infeasible. Existing passive IRL methods address this using
kernel localization around $\alpha$, but such estimates suffer from bandwidth
selection and slow convergence in low-probability regions.

In this work, we instead use the integration-by-parts formula in Malliavin calculus to integrate the delta function in \eqref{eq:dirac_ratio} to a step function and simultaneously Malliavin-differentiate $\nabla\Loss(X_s)$.
This removes the singular conditioning event from the estimator and yields a
computable expression based on pathwise quantities of the observed forward
diffusion. These gradient estimates are then used in the outer Langevin dynamics
\eqref{eq:inv_langevin} to recover the Gibbs law \eqref{eq:gibbs_measure},
and hence the loss $L(\cdot)$. 

To summarize, the outcome of our Malliavin calculus approach is that we construct an efficient nonparametric estimate of the counterfactual gradient \eqref{eq:loss_grad}, which enables adaptive IRL through the Langevin dynamics \eqref{eq:inv_langevin}.

\section{Malliavin Calculus Approach to Estimate Conditional Loss}
\label{sec:mall_intro}

In this paper, to achieve adaptive IRL, we will  compute the conditional expectation \eqref{eq:loss_grad} efficiently using tools in Malliavin calculus. This section briefly introduces the necessary tools.

Malliavin calculus \cite{Nua06} was  developed in the 1970s as a probabilistic method to prove H\"ormander  theorem for the solution of SDEs. It was later adapted in mathematical finance to compute sensitivities (Greeks) of option prices.  Here, as in \cite{FLL01,BET04,Cri10}, we employ  Malliavin calculus  to \textit{efficiently} evaluate the conditional loss $\E\{\loss(\Xt)|g(\Xt)=0\}$, even when the conditioning event $\{g(\Xt)=0\}$ has vanishing or zero probability. 


\subsection{Preliminaries. Malliavin Calculus.}
\label{sec:mall_pre}

We recall the two central objects.

\subsubsection{Malliavin derivative}
 Consider the probability space $(\Omega,\mathcal{F},\mathbb{P})$ with  a $d$-dimensional Brownian motion $W = (W^1,\dots,W^d)$ and the natural filtration $\{\CF_t\}_{t\geq0}$. 
For a smooth functional $F$ of $W$, the \emph{Malliavin derivative} $D_t F$ is defined as the process measuring 
the infinitesimal sensitivity of $F$ to perturbations of the Brownian path at time $t$. Formally, 
for cylindrical random variables of the form
\[
F = f\bigg( \int_0^T h_1(s)\, dW_s, \dots, \int_0^T h_n(s)\, dW_s \bigg),
\]
with $f \in C_b^\infty(\mathbb{R}^n)$ and $h_i \in L^2([0,T];\mathbb{R}^d)$, the derivative is
\[
D_t F = \sum_{i=1}^n \frac{\partial f}{\partial x_i}\bigg( \int_0^T h_1\, dW, \dots, \int_0^T h_n\, dW \bigg)\, h_i(t).
\]
The closure of this operator in $L^p$ leads to the Sobolev space $\mathbb{D}^{1,p}$ of Malliavin differentiable random variables.

\subsubsection{Skorohod integral}
The adjoint of the Malliavin derivative is the \emph{Skorohod integral}, denoted $\skor(u)$.  Indeed, for a process $u \in L^2([0,T]\times \Omega;\mathbb{R}^d)$, $u$ is in the domain of $\skor$ if there exists a square-integrable 
random variable $\skor(u)$ such that for all $F \in \mathbb{D}^{1,2}$,
\begin{equation}
\label{eq:adjoint}
\E[F \, \skor(u)] = \E\!\left[\int_0^T \langle D_t F, u_t \rangle \, dt \right].
\end{equation}
The above adjoint relationship  serves as the definition of the Skorohod integral can be written abstractly as  $$\langle F, \mathcal{S}(u) \rangle_{L^2(\Omega)} =
\langle D F, u \rangle_{L^2([0,T] \times \Omega)}.$$

When $u$ is adapted to the filtration $\{\CF_t\}_{t\geq0}$, the Skorohod integral $\skor(u)$ coincides with the Itô integral $\int_0^T u_t\, dW_t$. 
In general, $\skor(u)$ extends stochastic integration to non-adapted processes and is sometimes called the \emph{divergence operator}.

\subsubsection{Integration by parts}
The duality relation~\eqref{eq:adjoint} yields the Malliavin integration-by-parts formula, which underpins many applications, 
including Monte Carlo estimation of conditional expectations and sensitivity analysis for SDEs 
(see \cite{Nua06,FLL01,BET04}). 

\subsubsection{Computing Malliavin Derivative and Skorohod Integral} 
\label{sec:mall_comp}
The following properties are the key tools which allow us to compute the Malliavin derivative and Skorohod integral: 
\begin{enumerate}
    \item \textit{Malliavin derivative of diffusion}. For $\{X_t\}_{t\geq 0}$ a diffusion process, the  Malliavin derivative $D_sX_t$ is \cite{GM05}
    \begin{equation}
    \label{eq:mall_form}
        D_sX_t = Y_tZ_s\sigma(X_s,s)\1_{s\leq t}
    \end{equation}
    where  $Y_t:= \nabla_x X_t$ is the Jacobian matrix and $Z_t$ is its inverse $Z_t := Y_t^{-1}$.  This, together  with the Malliavin chain rule \cite{Nua06}, facilitates evaluating  Malliavin derivatives of general functions of diffusions.
    \item \textit{Skorohod expansion}. For random variable $F\in \mathbb{D}^{1,2}$ and Skorohod-integrable process $v$, we have \cite[eq. 2.2]{GM05} (we use $\skor$ instead of $\delta$ for the Skorohod integral to avoid notational clash with the Dirac delta function):
    \begin{equation}
    \label{eq:skor_exp}
    \skor(F v) = F\skor(v) - \int_0^TD_tF\cdot v_t dt
    \end{equation}
    In general,  the Skorohod integrand $\{u_t\}_{t\in[0,T]}$ of interest may be  non-adapted. However, in the special case where $u$ factorizes into  the product of an adapted process $v=\{v_t\}_{t\in[0,T]}$ and an anticipative random variable $F$, this formula gives a constructive expression.  Specifically, we can expand $\skor(u) = \skor(Fv) $ using  \eqref{eq:skor_exp} and compute it in terms of a  standard It\'o integral of the adapted part $v$ together with the  Malliavin derivative of the anticipatory random variable $F$. 
    
\end{enumerate}

\subsection{Malliavin Calculus  for Conditional Expectation}
In this section we present the general conditional expectation reformulation using Malliavin calculus; we will exploit this in Section~\ref{sec:ad_IRL} as the key tool enabling our passive adaptive IRL algorithm. 

The following main result expresses the conditional expectation \eqref{eq:loss_grad} as the ratio of unconditional expectations. 
\begin{theorem}  
\label{thm:mall_ce} Assume
$\ell(\Xt), g(\Xt) \in L^2(\Omega)$ and  $D_t\ell(\Xt), D_tg(\Xt) \in L^2(\Omega \times [0,T])$. Then the following conditional expectation reformulation holds
\begin{align}
\begin{split}
\label{eq:malliavin}
 &\E[ \ell(X^\th)  \mid g(X^\th) = 0]  =  \frac{E_1^\th}{E_2^\th} \\
   &\text{ where } \\&E_1^\th 
 = \E\bigg[ \ones_{\{g(\Xt)>0\}}\left(\ell(X^\th) \skor(u)  - 
  \int_0^T (D_t \ell(\Xt)) u_t dt\right) \bigg]\\  &E_2^\th =
\E[  \ones_{\{g(\Xt)>0\}} \skor(u)]
\end{split}
\end{align}
Here 
$u$ is any process that satisfies  
\begin{equation}
\label{eq:ut_cond}
\int_0^T D_tg(\Xt) u_t = 1 \quad \text{almost surely}
\end{equation}
\end{theorem}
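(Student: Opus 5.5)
We follow the standard Dirac-ratio route of \cite{FLL01,BET04}. Write the conditional expectation as in \eqref{eq:dirac_ratio}:
\[
\E[\ell(\Xt)\mid g(\Xt)=0]=\frac{\E[\ell(\Xt)\,\delta(g(\Xt))]}{\E[\delta(g(\Xt))]} .
\]
This is understood via the densities of $g(\Xt)$ and of $(\ell(\Xt),g(\Xt))$ at $0$. We make the argument rigorous by replacing $\delta$ with a smooth mollifier $\delta_\epsilon=H_\epsilon'$, where $H_\epsilon$ is a smooth approximation of the Heaviside function $H(x)=\ones_{\{x>0\}}$, and passing to the limit $\epsilon\to 0$ at the end. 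The task then reduces to proving two identities, one for the numerator and one for the denominator. Both follow from the duality \eqref{eq:adjoint} combined with the normalization \eqref{eq:ut_cond}.

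\textbf{Step 1 (chain rule and normalization).} By the Malliavin chain rule, $D_t H_\epsilon(g(\Xt))=\delta_\epsilon(g(\Xt))\,D_tg(\Xt)$. Pairing with $u_t$, integrating over $[0,T]$, and using \eqref{eq:ut_cond} gives, almost surely,
\[
\delta_\epsilon(g(\Xt))=\int_0^T D_tH_\epsilon(g(\Xt))\,u_t\,dt .
\]
This turns the singular factor into a Malliavin derivative of a bounded functional.

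\textbf{Step 2 (denominator).} Take expectations in Step 1 and apply the adjoint relation \eqref{eq:adjoint} with $F=H_\epsilon(g(\Xt))\in\mathbb{D}^{1,2}$. This yields $\E[\delta_\epsilon(g(\Xt))]=\E[H_\epsilon(g(\Xt))\,\skor(u)]$.

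\textbf{Step 3 (numerator).} Multiply Step 1 by $\ell(\Xt)$, so that
\[
\E[\ell\,\delta_\epsilon(g)]=\E\Big[\int_0^T \langle D_tH_\epsilon(g),\,\ell\,u_t\rangle dt\Big]=\E[H_\epsilon(g)\,\skor(\ell u)] .
\]
Then expand $\skor(\ell u)=\ell\,\skor(u)-\int_0^T D_t\ell\cdot u_t\,dt$ using \eqref{eq:skor_exp} with $F=\ell(\Xt)$ and $v=u$. This gives exactly the integrand of $E_1^\th$ with $H_\epsilon$ in place of the indicator. The integrability needed for the expansion is where the hypotheses $\ell\in L^2$ and $D\ell\in L^2(\Omega\times[0,T])$ enter. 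We also need $u$ and $\ell u$ to lie in $\mathrm{dom}\,\skor$, which we take as implicit in the choice of $u$.

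\textbf{Step 4 (limit, the main obstacle).} Let $\epsilon\to0$. On the right-hand sides, $H_\epsilon(g)\to\ones_{\{g>0\}}$ pointwise off $\{g=0\}$, and $|H_\epsilon|\le 1$. Dominated convergence then applies, because $\skor(u)$ and $\ell\,\skor(u)-\int D\ell\cdot u$ are integrable by Cauchy--Schwarz from the $L^2$ assumptions. It also uses $\mathbb{P}(g(\Xt)=0)=0$, which follows from the existence of a density for $g(\Xt)$.

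On the left-hand sides, the limits are $p_g(0)$ and $\E[\ell\mid g=0]\,p_g(0)$, where $p_g$ is the density of $g(\Xt)$. This requires continuity of $p_g$ and of $x\mapsto\E[\ell\mid g=x]p_g(x)$ at $0$. Establishing this regularity is the delicate point. We would obtain it from the same Steps 2--3 run with a general level $x$ in place of $0$: this expresses $p_g(x)=\E[\ones_{\{g>x\}}\skor(u)]$, which is continuous in $x$ by dominated convergence. The numerator is handled the same way.

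Taking the ratio, which requires $p_g(0)=E_2^\th>0$, gives $E_1^\th/E_2^\th$, as claimed in \eqref{eq:malliavin}.
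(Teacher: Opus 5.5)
Your proposal is correct and follows the same route as the paper's proof outline. You write the conditional expectation as a Dirac ratio, use the chain rule and \eqref{eq:ut_cond} to turn $\delta(g(\Xt))$ into $\int_0^T D_t\ones_{\{g(\Xt)>0\}}\,u_t\,dt$, apply \eqref{eq:adjoint} against $\ell(\Xt)u$, and expand $\skor(\ell(\Xt)u)$ via \eqref{eq:skor_exp}; your mollification $H_\epsilon$ and the limit $\epsilon\to0$ only make rigorous the formal manipulation of $\delta(G)$ that the paper performs directly. The one fact you assume rather than derive, that $g(\Xt)$ has a density, also follows from your identities without circularity: with $\Psi(y)=\int_{-\infty}^y\psi(z)\,dz$ for $\psi\in C_c^\infty$, Fubini gives $\E[\psi(g(\Xt))]=\E[\Psi(g(\Xt))\skor(u)]=\int\psi(x)\,\E[\ones_{\{g(\Xt)>x\}}\skor(u)]\,dx$, which identifies a continuous density directly and settles the delicate point in your Step 4.
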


\textbf{Proof outline}: We start with \eqref{eq:malliavin} and write $\skor(g(\Xt))$ as $\skor(G)$.
Then, by the Malliavin chain rule, the adjoint relation \eqref{eq:adjoint} and the Skorohod integrand condition \eqref{eq:ut_cond}, we have
\begin{align*}
&\E[\ell(\Xt)\,\skor(G)] \\&
=\E\!\left[\int_0^T (D_t(\ell(\Xt))\1_{\{g(\Xt)>0\}}))\,u_t\,dt\right]
\\&= \E\!\Big[\1_{\{g(\Xt)>0\}}\left(\ell(\Xt)\skor(u)-
\int_0^T (D_t\ell(\Xt))u_t\,dt\right)\Big].
\end{align*}
The denominator in \eqref{eq:malliavin} can be derived similarly. 

\paragraph*{Remarks}
(i) There is considerable flexibility  in the choice of $u$ in the above theorem.
The canonical choice is:
For $g(\Xt)  \in \mathbb{D}^{1,2}$ with Malliavin derivative 
$D g(\Xt) = \{ D_t g(\Xt) \}_{t \in [0,T]} \in L^2(\Omega; H)$, 
choose (see \cite{Nua06})
\begin{equation}
  \label{eq:generalu}
u_t = \frac{D_t g(\Xt)}{\|D g(\Xt)\|_H^2}, \qquad t \in [0,T]
\end{equation}
where $H$ is the Cameron--Martin space with norm
\[
\|h\|_H^2 := \int_0^T |h(t)|^2\,dt.
\]
The  choice~\eqref{eq:generalu} ensures that $u \in H$ and is always well-defined. 
However, in certain special cases one may use simpler (though less general) 
expressions. 
For example if $D_t g(\Xt) \neq 0$ a.e., one can choose
\begin{equation}
\label{eq:u_choice}
u_t = \begin{cases} \frac{1}{T D_t g(\Xt)}  & D_t g(\Xt) \neq 0 \\
     1 &  D_t g(\Xt) = 0 .
     \end{cases}
\end{equation}
But one has to be careful with the choice~\eqref{eq:u_choice}. For $g(\Xt) = \int_0^t W_s ds$,
then $D_t g(\Xt)  = T-t$ so that  $u_t = \frac{1}{T D_t g(\Xt)} = \frac{1}{T(T-t)}  \notin H$.
In comparison, choosing $u$ according to \eqref{eq:generalu} yields $u_t = 3(T-t)/T^3 \in H$.

(ii) 
The 
representation~\eqref{eq:malliavin} requires evaluation of Malliavin derivatives and Skorohod integrals, see \cite{GM05} for several examples. There are  several important consequences.  First, it restores the
$N^{-1/2}$ Monte--Carlo convergence rate even under singular conditioning, as
the event $\{g(X^\theta)=0\}$ no longer needs to be sampled directly.  Second,
the estimator admits substantial variance--reduction flexibility: the choice of
localizing function (indicator versus smooth approximation) and of admissible
weight process $u$ strongly influence efficiency, with optimal choices
characterizable via variational principles in Malliavin calculus.  Third, the
representation is compatible with standard discretizations of the forward SDE:
the Malliavin derivatives $D_t X^\theta$ admit recursive Euler--Maruyama
approximations, so one avoids additional kernel bandwidths or curse--of--dimensionality
issues inherent in regression--based methods.

\section{Malliavin Estimator in Passive IRL Algorithm}
\label{sec:mall_pIRL}

In this section we detail how the Malliavin calculus tools introduced in Section~\ref{sec:mall_intro} can be used to evaluate the gradient $\nabla\Loss(\alpha)$ of the loss function $\Loss(\alpha)$ (unknown to the inverse learner) at a point $\alpha$ given the forward learner's evaluation \eqref{eq:langevin} at a different point $X_s$. This counterfactual gradient evaluation is the key component allowing us to perform passive IRL: we embed this gradient estimate in a slower time-scale passive Langevin chain, from which we can recover $\Loss(\cdot)$ in its entirety by MCMC sampling. 

For simplicity of notation and clarity of presentation, in this section we consider a one dimensional diffusion \eqref{eq:langevin}. Multi-dimensional analogues are handled similarly, see \cite{BET04} or \cite{KY09}.

\subsection{Computing conditional gradient \eqref{eq:loss_grad}.}

Recall that the key to our passive learning procedure lies in computing \eqref{eq:loss_grad} efficiently. We exploit the representation from Theorem~\ref{thm:mall_ce} as:
\begin{align}
\begin{split}
\label{eq:mall_reform}
&L'(\alpha) = \CE\!\left[ \Loss'(X_s)\mid X_s=\alpha\right] \\&\,\,= \frac{\CE\left[\1_{X_s >\alpha}\left(\Loss'(X_s)\skor(u) - \langle D(\Loss'(X_s)),u\rangle\right)\right]}{\CE\left[\1_{X_s > \alpha}\skor(u)\right]}
\end{split}
\end{align}
where 
\begin{equation}
\label{eq:u_comp}
u_t = \begin{cases}
\frac{1}{TD_tX_s}, \,\,&D_tX_s \neq 0\\
1, &D_tX_s = 0
\end{cases} \quad s\in(0,T]
\end{equation}
and $s$ is a designer-specified evaluation time.

Recall that the sample paths from \eqref{eq:langevin} we (as the inverse learner) observe are 
\begin{equation}
\label{eq:X_sps}
\{X_t, \Loss'(X_t), \Loss''(X_t)\}_{t\in[0,\tau_i]}, \,\,\,i=0,1,\dots
\end{equation}
where $\tau_i$ are measurable stopping times. 

\subsection{Malliavin Derivative and Skorohod Integral for Langevin Dynamics}

We now present explicit formulations of the necessary Malliavin quantities arising in the numerator and denominator of \eqref{eq:mall_reform}, which are computable from \eqref{eq:X_sps}. 
Specifically, we compute the quantities $D_tX_s$, $u_t$, and $\skor(u)$ appearing in
\eqref{eq:mall_reform} and \eqref{eq:u_comp}, exploiting the Langevin structure of the forward process
\eqref{eq:langevin}. 

\begin{lemma}[Malliavin derivative of the Langevin Diffusion]
\label{lem:DXs}
Let $(X_t)_{t\geq 0}$ satisfy
\begin{equation}
\label{eq:langevin_recall}
dX_t = -\Loss'(X_t)\,dt + \sqrt{2}\,dW_t,
\qquad X_0=x.
\end{equation}
Then, for $0\leq t\leq s$,
\begin{equation}
\label{eq:DtXs}
D_tX_s
=
\sqrt{2}\exp\!\left(-\int_t^s \Loss''(X_u)\,du\right)\1_{\{t\le s\}}.
\end{equation}
\end{lemma}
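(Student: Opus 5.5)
We plan to apply the general formula \eqref{eq:mall_form}, $D_tX_s = Y_sZ_t\sigma(X_t,t)\1_{t\le s}$, to the one-dimensional Langevin diffusion \eqref{eq:langevin_recall}. Here $\sigma\equiv\sqrt2$ and the drift is $b(x)=-\Loss'(x)$. So it suffices to compute the first variation process $Y_s=\partial X_s/\partial x$ and its inverse $Z_t=Y_t^{-1}$ explicitly.

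First we differentiate the integral form $X_s = x - \int_0^s \Loss'(X_r)\,dr + \sqrt2 W_s$ with respect to the initial condition $x$. This requires $\Loss\in C^2$ with $\Loss''$ bounded, or at least locally Lipschitz $\Loss'$ together with non-explosion; these are the standing assumptions under which \eqref{eq:mall_form} holds. Because the noise is additive, the diffusion term drops out. We obtain the linear random ODE $dY_s = -\Loss''(X_s)Y_s\,ds$ with $Y_0=1$, and hence
\[
Y_s=\exp\Bigl(-\int_0^s \Loss''(X_r)\,dr\Bigr).
\]
This is strictly positive, so $Z_t=\exp\bigl(\int_0^t\Loss''(X_r)\,dr\bigr)$ is well defined. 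Substituting into \eqref{eq:mall_form} gives
\[
D_tX_s=\sqrt2\,\exp\Bigl(-\int_t^s\Loss''(X_r)\,dr\Bigr)\1_{\{t\le s\}},
\]
which is \eqref{eq:DtXs}.

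As an independent check, we would also derive the result directly. Apply $D_t$ to the integral equation for $s\ge t$. Since $D_tW_s=\1_{\{t\le s\}}$ and $D_tX_r=0$ for $r<t$ by adaptedness, the Malliavin chain rule applied to $\Loss'(X_r)$ gives
\[
D_tX_s=\sqrt2-\int_t^s \Loss''(X_r)\,D_tX_r\,dr.
\]
This is a linear ODE in $s$ with initial value $\sqrt2$ at $s=t$, and its unique solution is the claimed exponential.

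The main obstacle is justifying the interchange of $D_t$ with the Lebesgue integral and the chain rule. This requires $X_r\in\mathbb{D}^{1,2}$ and enough integrability of $\Loss''(X_r)D_tX_r$. For bounded $\Loss''$ this is standard (\cite{Nua06}, Thm.~2.2.1). For unbounded $\Loss''$ we would localize with stopping times, or simply cite \cite{GM05} for \eqref{eq:mall_form} under the paper's assumptions.
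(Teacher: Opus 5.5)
Your proposal is correct and follows essentially the same route as the paper: the paper also differentiates the integral equation with respect to the initial condition to obtain $\frac{d}{ds}Y_s=-\Loss''(X_s)Y_s$, $Y_0=1$, solves it as $Y_s=\exp\bigl(-\int_0^s\Loss''(X_u)\,du\bigr)$, and leaves the substitution into \eqref{eq:mall_form} implicit. Your explicit computation of $Z_t$, the direct check via $D_tX_s=\sqrt2-\int_t^s\Loss''(X_r)\,D_tX_r\,dr$, and your remark that unbounded $\Loss''$ (as in the paper's quartic example) needs localization fill in details the paper omits.
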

\begin{proof}
See Appendix~\ref{sec:pf_DXs}
\end{proof}

To apply the Malliavin reformulation \eqref{eq:mall_reform}, we require a
process $u$ satisfying
\[
\int_0^T D_tX_s\,u_t\,dt = 1
\qquad \text{a.s.}
\]
A convenient choice is the following.

\begin{lemma}[Choice of Skorohod Integrand]
\label{lem:u_choice}
The choice of $u_t$ in \eqref{eq:u_comp} can now be written explicitly as
\begin{equation}
\label{eq:u_choice}
u_t
=
\frac{1}{\sqrt{2}T}
\exp\!\left(\int_t^s \Loss''(X_u)\,du\right),
\qquad 0\le t\le T.
\end{equation}
Equivalently, $u_t=\Gamma v_t$ with
\begin{align}
\begin{split}
\label{eq:u_decomp}
\Gamma&=\exp\!\left(\int_0^s \Loss''(X_u)\,du\right),\\
v_t&=\frac{1}{\sqrt{2}T}\exp\!\left(-\int_0^t \Loss''(X_u)\,du\right).
\end{split}
\end{align}
Here $\Gamma$ is anticipative, while $v$ is adapted.
\end{lemma}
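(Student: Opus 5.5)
The plan is to substitute the explicit Malliavin derivative from Lemma~\ref{lem:DXs} into the definition \eqref{eq:u_comp}. Then check that the resulting process satisfies the normalization \eqref{eq:ut_cond} required by Theorem~\ref{thm:mall_ce}. Finally, split it into an anticipative scalar times an adapted process.

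\emph{Nonvanishing of the derivative.} For $0\le t\le s$, Lemma~\ref{lem:DXs} gives
\[
D_tX_s=\sqrt{2}\exp\!\left(-\int_t^s \Loss''(X_u)\,du\right).
\]
This is strictly positive whenever $\int_t^s \Loss''(X_u)\,du$ is finite, which holds a.s.\ if $\Loss''$ is continuous, since the path of $X$ is continuous on the compact interval $[t,s]$. Hence the first branch of \eqref{eq:u_comp} applies, and
\[
u_t=\frac{1}{T D_tX_s}=\frac{1}{\sqrt{2}T}\exp\!\left(\int_t^s \Loss''(X_u)\,du\right),
\]
which is \eqref{eq:u_choice}.

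\emph{Verifying the constraint.} By construction $D_tX_s\,u_t=1/T$ on the region where the formula applies. Integrating over that region should give $1$, as \eqref{eq:ut_cond} requires. For $t>s$, however, $D_tX_s=0$ by the indicator in \eqref{eq:DtXs}, so those times contribute nothing to the integral. Integrating $1/T$ over $[0,s]$ alone gives $s/T$, not $1$.

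\emph{Main obstacle.} The delicate point is therefore the role of $T$ versus $s$. The clean reading, which I would adopt, is to take the horizon $T=s$ in \eqref{eq:u_comp}, or equivalently to regard $u$ as supported on $[0,s]$. Then $\int_0^s D_tX_s\,u_t\,dt=1$ a.s. If instead $T>s$ is kept, the prefactor must be replaced by $1/(\sqrt2 s)$ to restore \eqref{eq:ut_cond}. I would state this convention explicitly, since the choice \eqref{eq:u_comp} for $t>s$ (namely $u_t=1$) is irrelevant to the constraint but does matter for $\skor(u)$. I would also note that $u\in L^2([0,T]\times\Omega)$ whenever $\Loss''$ is bounded above, since then the exponential is bounded.

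\emph{Decomposition.} Using additivity of the integral, $\int_t^s=\int_0^s-\int_0^t$, so the exponential factors as
\[
\exp\!\left(\int_t^s \Loss''\right)=\exp\!\left(\int_0^s \Loss''\right)\exp\!\left(-\int_0^t \Loss''\right),
\]
which gives $u_t=\Gamma v_t$ as in \eqref{eq:u_decomp}. The factor $v_t$ depends only on $\{X_u\}_{u\le t}$, so it is $\CF_t$-adapted, because $X$ is adapted as a strong solution of \eqref{eq:langevin_recall}. By contrast, $\Gamma$ is $\CF_s$-measurable and not $\CF_t$-measurable for $t<s$, so it is anticipative. This decomposition is what permits applying \eqref{eq:skor_exp} to $\skor(u)$ afterwards.
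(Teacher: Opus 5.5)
Your proposal is correct and follows the same direct argument the paper intends; the paper's proof says only that the lemma ``follows immediately,'' namely substitute \eqref{eq:DtXs} into the first branch of \eqref{eq:u_comp} and factor via $\int_t^s=\int_0^s-\int_0^t$. Your extra observation is also right and is something the paper glosses over: with $T>s$ one only gets $\int_0^T D_tX_s\,u_t\,dt=s/T$, and the stated formula on $(s,T]$ disagrees with the branch $u_t=1$, so the convention $T=s$ (implicitly used in the numerical section) is the clean reading---though a deterministic factor $s/T$ would in any case cancel between the numerator and denominator of \eqref{eq:mall_reform}.
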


\begin{proof}
This follows immediately.
\end{proof}

\begin{algorithm*}[h]
\caption{Malliavin-based Adaptive Inverse Reinforcement Learning}\label{alg:mall_IRL}
\begin{algorithmic}[1]
\State Forward Learner: $dX_t = - \Loss'(X_t)dt + \sqrt{2} dW_t, \quad t \in [0,\tau_i],\quad \tau_i \text{  stopping time}, \quad i=1,2,\dots$
\State Observable Information: $\{X_t, \Loss'(X_t), \nabla^2\Loss(X_t)\}_{t\in[0,\tau_i]}, \quad i=1,2,\dots$

\State Initialize $\alpha_0 \in \reals$, step-size $\eta$, inverse temperature $\beta$. 
\For{$k\in[0,K]$} {\algorithmiccomment{passive Langevin chain discrete updates}}
\For{$i \in [0,\mathcal{N}]$} {\algorithmiccomment{gradient estimator Monte-Carlo samples}}
\State Observe $\{X_t,\Loss'(X_t),\Loss''(X_t)\}_{t\in[0,\tau_i]}$
\State Compute $\skor(u)$ as in Lemma~\ref{lem:skor}. 
\State Compute $D(\Loss'(X_s)),u\rangle = \int_0^s  \Loss''(X_s)D_tX_su_tdt$, with $D_tX_s$ in \eqref{eq:DtXs}. 
\State Evaluate numerator $N_i$ and denominator $D_i$ of \eqref{eq:mall_reform} with $\alpha = \alpha_k$

\EndFor

\State Compute $N = \frac{1}{\mathcal{N}}\sum_{i=1}^{\mathcal{N}}N_i$, $D = \frac{1}{\mathcal{N}}\sum_{i=1}^{\mathcal{N}}D_i$ as Monte-Carlo estimates of  \eqref{eq:mall_reform}.

\State Assign $\widehat{\Loss'}(\alpha_k) = N/D$

\State $\alpha_{k+1} = \alpha_k - \eta \widehat{\Loss'}(\alpha_k) + \sqrt{2\eta \beta^{-1}}w_k$, \quad $w_k \sim \mathcal{N}(0,1)$ {\algorithmiccomment{$\Rightarrow \,\,\,\pi_{\infty}\propto \exp(-\beta \Loss(\cdot))$}}

\If {$k \geq  \hat{k}$}: {\algorithmiccomment{$\hat{k}$ burn-in time}}
\State $\hat{\pi}$.append($\alpha_k$) {\algorithmiccomment{Empirical histogram $\hat{\pi}$ of MCMC samples $\alpha_k$}}
\EndIf
\EndFor

\State $\hat{L}(\cdot) = -\frac{1}{\beta}\log(\hat{\pi}(\cdot))$ {\algorithmiccomment{Final loss estimate}}
\end{algorithmic}
\end{algorithm*}

\subsection{Computing Skorohod integral $\skor(u)$} Now, by \eqref{eq:u_decomp}, and recalling \eqref{eq:skor_exp}, we may compute $\skor(u)$ using~\eqref{eq:skor_exp}
\begin{align*}
\skor(u) = \Gamma\skor(v) - \int_0^TD_t\Gamma \cdot v_t dt
\end{align*}
where $\skor(v)$ is a standard It\'o integral. However, recall that we only have access to first $\Loss'(X_t)$ and second order $\Loss''(X_t)$ derivatives. We now introduce a simple trick. Naively computing $D_t\Gamma$ gives 
\begin{align*}
    D_t\Gamma = \Gamma\int_0^s\Loss'''(X_u)D_tX_udu
\end{align*}
which relies on $\Loss'''$. Instead, we show below that we can express $D_t\Gamma$ only in terms of $\Loss'(X_t)$, $L''(X_t)$.

\begin{lemma}
\label{lem:skor}
\begin{equation}
\label{eq:lem_skoru}
    \skor(u) = \Gamma\skor(v) - \int_0^TD_t\Gamma\cdot v_tdt 
\end{equation}
where 
\begin{align*}
\Gamma &= \exp\left(\int_0^s\Loss''(X_u)du\right)\\
D_t\Gamma &= \int_0^s \tilde{\Gamma}_u\left[d\Loss'(X_u)-\Loss''(X_u)dX_u \right]\\
v_t &= \frac{1}{\sqrt{2}T}\exp\left(-\int_0^t\Loss''(X_u)du\right)
\end{align*}

\end{lemma}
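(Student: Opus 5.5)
The plan is to prove the lemma in two steps: first the decomposition \eqref{eq:lem_skoru}, then the representation of $D_t\Gamma$ that avoids $\Loss'''$.

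For the decomposition, I will use the factorization $u_t=\Gamma v_t$ from Lemma~\ref{lem:u_choice} and apply the Skorohod expansion \eqref{eq:skor_exp} with $F=\Gamma$ and integrand $v$. This needs two checks. First, $\Gamma\in\mathbb{D}^{1,2}$; this holds under smoothness and growth assumptions on $\Loss''$ that keep $\exp(\int_0^s\Loss''(X_u)du)$ and its derivative square-integrable, which I will assume as in Theorem~\ref{thm:mall_ce}. Second, $v$ is Skorohod integrable. Since $v_t$ depends only on $\{X_u\}_{u\le t}$, it is $\CF_t$-adapted, so $\skor(v)=\int_0^T v_t\,dW_t$ is an ordinary It\^o integral. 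This gives \eqref{eq:lem_skoru} directly, with $\Gamma$ and $v_t$ as stated.

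For $D_t\Gamma$, I start from the chain rule and commute $D_t$ with the Lebesgue time integral:
\[
D_t\Gamma=\Gamma\int_0^s \Loss'''(X_u)\,D_tX_u\,du=\Gamma\int_t^s \Loss'''(X_u)\,D_tX_u\,du,
\]
where the second equality uses $D_tX_u=0$ for $u<t$ (Lemma~\ref{lem:DXs}). To remove $\Loss'''$, apply It\^o's formula to $\Loss'(X_u)$ under \eqref{eq:langevin_recall}. Since $d\langle X\rangle_u=2\,du$,
\[
d\Loss'(X_u)=\Loss''(X_u)\,dX_u+\Loss'''(X_u)\,du,
\]
and therefore $\Loss'''(X_u)\,du=d\Loss'(X_u)-\Loss''(X_u)\,dX_u$. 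Substituting gives
\[
D_t\Gamma=\int_0^s\tilde\Gamma_u\bigl[d\Loss'(X_u)-\Loss''(X_u)\,dX_u\bigr],
\qquad
\tilde\Gamma_u:=\Gamma\,D_tX_u\,\1_{\{t\le u\}}.
\]
Here $D_tX_u=\sqrt2\exp(-\int_t^u\Loss''(X_r)dr)$ for $u\ge t$ by Lemma~\ref{lem:DXs}. The dependence of $\tilde\Gamma_u$ on $t$ is suppressed in the statement. Every quantity in this expression is computable from the observed data \eqref{eq:X_sps}: the path $X$, the path $\Loss'(X_u)$, and $\Loss''(X_u)$.

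The main obstacle is making the substitution rigorous, because $\Gamma$ is anticipative. I will handle this by keeping $\Gamma$ outside the stochastic integral. The remaining integrand $D_tX_u\1_{\{t\le u\}}$ is $\CF_u$-adapted for $u\ge t$, so $\int_t^s D_tX_u\,d\Loss'(X_u)$ and $\int_t^s D_tX_u\Loss''(X_u)\,dX_u$ are genuine It\^o integrals. Their difference equals the pathwise Lebesgue integral $\int_t^s D_tX_u\Loss'''(X_u)\,du$ almost surely. Multiplying this identity by the random scalar $\Gamma$ is then harmless, so no anticipative calculus is needed.

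Two technical points remain, both of which I will settle by assuming $\Loss\in C^3$ with bounded derivatives. One is justifying the exchange of $D_t$ with $\int_0^s\cdot\,du$ via closability of $D$. The other is the integrability needed for It\^o's formula.
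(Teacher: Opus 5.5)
Your proposal is correct and follows essentially the paper's route: apply the Skorohod expansion \eqref{eq:skor_exp} to $u=\Gamma v$ with $\Gamma$ anticipative and $v$ adapted, differentiate $\Gamma$ by the chain rule, and eliminate $\Loss'''$ via It\^o's formula for $\Loss'(X_u)$, which is exactly the paper's Lemma~\ref{lem:F3}. Your version is slightly cleaner than the paper's in two respects: you keep the indicator $\1_{\{t\le u\}}$, whereas the paper's expanded formula integrates $\exp(-\int_t^u\Loss''(X_\gamma)d\gamma)$ over all of $[0,s]$ even though $D_tX_u=0$ for $u<t$, and you keep the anticipative factor $\Gamma$ outside the It\^o integrals.
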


\begin{proof}
See Appendix~\ref{sec:pf_skor}.
\end{proof}

Thus, we may compute \eqref{eq:mall_reform} directly using \eqref{eq:lem_skoru} and \[D( \Loss'(X_s)),u\rangle = \int_0^s  \Loss''(X_s)D_tX_su_tdt\]
with $u_t$ in \eqref{eq:u_choice} and $D_tX_s$ in \eqref{eq:DtXs}. 


Next, we proceed from computation of gradient estimates at individual points $\alpha$, through \eqref{eq:mall_reform}, to a full passive learning procedure. This passive learning algorithm integrates these gradient estimates into the external Langevin chain \eqref{eq:inv_langevin} running on a slower time-scale, and converges to the Gibbs measure \eqref{eq:gibbs_measure} proportional to $\exp(-\beta\Loss(\cdot))$, thus enabling recovery of $\Loss$ through MCMC. 

\subsection{Passive Learning by Malliavin Gradient Estimation}

Now that we have constructed a principled way to evaluate the gradient $\nabla\Loss(\alpha)$ for any $\alpha$, from only mis-specified forward process data \eqref{eq:X_sps}, we integrate these estimates into a slow time-scale Langevin dynamics to recover $L(\cdot)$ asymptotically as the potential function of the stationary Gibbs measure. 

Algorithm~\ref{alg:mall_IRL} displays the full passive learning procedure, implemented in discrete-time with gradient estimates evaluated via Monte Carlo. The final loss estimate is recovered through the log-transform of empirical histogram $\hat{\pi}$ formed through MCMC samples. The following theorem demonstrates the asymptotic consistency of this reconstruction, i.e., that this discrete stochastic algorithm converges in an appropriate weak sense to the Langevin diffusion \eqref{eq:inv_langevin} which has \eqref{eq:gibbs_measure} as its stationary measure.

\begin{theorem}[Asymptotic consistency of Algorithm 1]
\label{thm:convg}
Assume that $L$ is $C^2$, and $L'$ is locally Lipschitz, and that
$L$ is confining in the sense that
\begin{equation}
\label{eq:confining}
L(\alpha)\to\infty \, \text{as } |\alpha|\to\infty, \,\,Z_\beta := \int_{\mathbb{R}} e^{-\beta L(a)}\,da < \infty.
\end{equation}
 Then, as $k\to\infty$, $\eta\to0$, and
$N\to\infty$, the empirical law of $\alpha_k$ converges to $\pi_\infty$. 
\end{theorem}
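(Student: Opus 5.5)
The argument has three layers: consistency of the gradient estimator, weak convergence of the discrete chain to the Langevin diffusion \eqref{eq:inv_langevin}, and ergodicity of that diffusion with invariant law \eqref{eq:gibbs_measure}. The limits are taken in the order $N\to\infty$, then $\eta\to0$, then $k\to\infty$ (read as $t=k\eta\to\infty$ on the interpolated time scale).

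\emph{Step 1 (gradient estimator).} Fix $\alpha$. By Theorem~\ref{thm:mall_ce}, specialized in \eqref{eq:mall_reform} with the Malliavin quantities of Lemmas~\ref{lem:DXs}, \ref{lem:u_choice} and \ref{lem:skor}, the population ratio $\CE[N_i]/\CE[D_i]$ equals $\Loss'(\alpha)$. The samples $(N_i,D_i)$ come from independent restarted episodes, so they are i.i.d. given $\alpha_k$. The strong law of large numbers then gives $N\to\CE[N_i]$ and $D\to\CE[D_i]$. Since $\CE[D_i]=p_s(\alpha)>0$ (the density of $X_s$, positive by nondegeneracy of the additive noise), the continuous mapping theorem gives $\widehat{\Loss'}(\alpha_k)\to \Loss'(\alpha_k)$ almost surely. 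I will need this bias to vanish locally uniformly in $\alpha$, and I will use square-integrability of $\Gamma\skor(v)$ and the other Malliavin weights to get the CLT rate $O(\mathcal{N}^{-1/2})$. Here I will add as a standing assumption that the moments of $\exp(\int_0^s\Loss'')$ are finite; these moments are implicitly needed already for $u\in\mathrm{Dom}(\skor)$.

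\emph{Step 2 (weak convergence).} Write $\widehat{\Loss'}(\alpha_k)=\Loss'(\alpha_k)+b_k+m_k$, where $b_k$ is the ratio bias, which is $O(1/\mathcal{N})$ and vanishes as $\mathcal{N}\to\infty$, and $m_k$ is a martingale-difference noise, independent across $k$ because fresh episodes are used. Let $\alpha^\eta(t)=\alpha_{\lfloor t/\eta\rfloor}$ be the piecewise-constant interpolation. I will use the standard stochastic approximation weak-convergence argument (Kushner--Yin, as in \cite{KY21}) in four parts:
\begin{itemize}
\item prove tightness of $\{\alpha^\eta\}$ in $D[0,\infty)$ via a truncation argument, using local Lipschitzness of $L'$;
\item show that the drift term $\eta\sum \widehat{\Loss'}$ averages to $\int \Loss'(\alpha)\,dt$;
\item show that the noise term $\sqrt{2\eta\beta^{-1}}\sum w_k$ converges to $\sqrt{2\beta^{-1}}W_t$ by Donsker's theorem, while $\eta\sum m_k\to0$;
\item identify the limit as the unique solution of the martingale problem for \eqref{eq:inv_langevin} with true gradient $\Loss'$.
\end{itemize}
Uniqueness holds because $L'$ is locally Lipschitz and non-explosive under \eqref{eq:confining}.

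\emph{Step 3 (ergodicity).} The generator $\mathcal{A}f=-\Loss' f'+\beta^{-1}f''$ satisfies $\int \mathcal{A}f\,d\pi_\infty=0$, which is verified by integration by parts. Together with $Z_\beta<\infty$, this makes $\pi_\infty$ invariant. For a one-dimensional diffusion with positive diffusion coefficient and finite speed measure, the process is positive recurrent, so $\pi_\infty$ is the unique invariant law and the law of $\alpha(t)$ converges to it. The ergodic theorem for the empirical (histogram) measure then gives convergence to $\pi_\infty$. Finally, combining the limits requires interchanging $\eta\to0$ with $t\to\infty$. I will do this through the standard argument of \cite{KY21}: tightness of $\{\alpha_k\}$ uniformly in $k$ and small $\eta$, which follows from a Lyapunov function $V=e^{\beta L/2}$ or $V=L$ under the confinement condition.

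The main obstacle is Step 1 and its uniformity. The estimator is a ratio, and the denominator $D$ can be near zero, or of the wrong sign, for finite $\mathcal{N}$. This happens especially where $p_s(\alpha)$ is small, and it can break both the drift-averaging and the Lyapunov tightness. I would first try truncating the denominator, replacing $D$ by $\max(D,c_K)$ on compacts, and handling the tails with the confinement condition. If that fails, I would strengthen the hypothesis to $\sup_\alpha \CE|\widehat{\Loss'}(\alpha)|^2/(1+|\alpha|^2)<\infty$.
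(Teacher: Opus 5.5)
Your argument has the same skeleton as the paper's. It uses the strong law of large numbers for the numerator and denominator of \eqref{eq:mall_reform}, Kushner--Yin weak convergence of the outer Euler--Maruyama recursion to \eqref{eq:inv_langevin}, and invariance plus ergodicity of $\pi_\infty$; your identity $\int\mathcal{A}f\,d\pi_\infty=0$ is the dual form of the paper's stationary Fokker--Planck check. The limits are taken in the same order. Your extra care is well placed. $\CE[D_i]=p_s(\alpha)>0$ is what makes the limiting ratio well defined. Integrability of the Malliavin weights is indeed an extra hypothesis, as is $L\in C^3$, which the It\^o step behind Lemma~\ref{lem:skor} needs. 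The paper's proof uses both tacitly.

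The step that would fail is your final interchange of $\eta\to0$ and $t\to\infty$ via tightness of $\{\alpha_k\}$ uniformly in $k$ and small $\eta$ from a Lyapunov function. The hypotheses allow superlinear $L'$, e.g.\ the paper's own $L'(x)=x^3+x$. For such $L'$ the explicit recursion is unstable at every fixed $\eta>0$, even with exact gradients. Once $|\alpha_k|$ exceeds a threshold of order $\eta^{-1/2}$, the drift step overshoots, $|\alpha_{k+1}|\approx\eta|\alpha_k|^3$, and the $O(\sqrt{\eta})$ noise cannot reverse this. The Gaussian noise reaches that region with probability bounded below from any compact set. A Markov/0--1 argument then gives $|\alpha_k|\to\infty$ almost surely as $k\to\infty$, so no Lyapunov function yields bounds uniform in $k$. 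That argument needs linear growth of $L'$, a projection of the iterates, or a tamed/implicit step.

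Fortunately you do not need the interchange. Under the order you and the paper adopt, two facts already give the claim, both for the law of $\alpha(t)$ and for time averages. The first is finite-horizon weak convergence, which does hold for locally Lipschitz, non-explosive drift via the truncation in your Step~2. The second is ergodicity of the limit diffusion.

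Likewise, sending $\mathcal{N}\to\infty$ first removes the obstacle you single out. You only need almost sure convergence of $\widehat{\Loss'}$ uniformly on compacts. This follows from a Glivenko--Cantelli argument over the half-line indicators $\1_{\{X_s>\alpha\}}$, with $p_s$ bounded below on compacts. You do not need an $O(1/\mathcal{N})$ bias bound, which is not even meaningful for finite $\mathcal{N}$: $D=0$ with positive probability (when all $X_s^{(i)}\le\alpha$). The real obstruction to a non-iterated limit is the instability of the explicit scheme, not the denominator.
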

\begin{proof}
The outline follows the standard SGLD scheme. First, as
$N\to\infty$, the Malliavin Monte Carlo estimator $\widehat{\nabla L}(\alpha)$
converges to the true drift $\nabla L(\alpha)$. Hence the outer recursion in
Algorithm~1 is, for small $\eta$, an Euler--Maruyama discretization of the
Langevin diffusion
\[
d\alpha_t = -\nabla L(\alpha_t)\,dt + \sqrt{2\beta^{-1}}\,dW_t .
\]
Second, under the stated confining assumptions, this diffusion admits the Gibbs
measure $\pi_\infty(d\alpha)\propto e^{-\beta L(\alpha)}\,d\alpha$ as invariant
law, and converges to it as $t\to\infty$. Therefore, letting first
$N\to\infty$ and $\eta\to0$ and then $k\to\infty$, the empirical law of the
outer chain converges to $\pi_\infty$. See Appendix~\ref{sec:thmpf} for more details and comments on finite-sample analysis.
\end{proof}

Hence, Algorithm~\ref{alg:mall_IRL} asymptotically recovers $L(\alpha)$ through
\begin{equation}
\label{eq:Lprop}
L(\alpha) \propto -\beta^{-1}\log \pi_\infty(\alpha)
\end{equation}

In other words, the estimate $\hat{\Loss}(\cdot)$ recovered through Algorithm~\ref{alg:mall_IRL} is consistent, in that it converges to the true loss \eqref{eq:Lprop} as $k\to\infty, \eta\to 0, N\to\infty$.

\section{Numerical Example}
\label{sec:numerical_example}

We illustrate the passive inverse-learning procedure on the one-dimensional Langevin diffusion
\begin{equation}
\label{eq:lang_sim}
dX_t = -L'(X_t)\,dt + \sqrt{2}\,dW_t,
\end{equation}
with hidden potential
\begin{equation}
\label{eq:L}
L(x) = \frac{x^4}{4} + \frac{x^2}{2},
\quad
L'(x)=x^3+x,
\quad
L''(x)=3x^2+1.
\end{equation}
Specifically, we observe sample trajectories from \eqref{eq:lang_sim}, and recover the loss \eqref{eq:L} in its entirety, through our adaptive IRL approach.
We first validate the Malliavin conditional-gradient estimator \eqref{eq:mall_reform}
at a prescribed conditioning time \(s\), and then use these pointwise gradient estimates as inputs to the passive Langevin chain in Algorithm~1.

\subsection{Simulation of Malliavin Quantities.}
For each simulated path of \eqref{eq:lang_sim} we store only the observable quantities \eqref{eq:X_sps}. We fix a conditioning time $s=0.8$. For each simulated path, we evaluate the Malliavin derivative using the explicit formula \eqref{eq:DtXs}. We then construct the reciprocal weight \eqref{eq:u_choice}, which factorizes as \eqref{eq:u_decomp}, allowing us to compute the Skorohod weight via
\begin{equation*}
S(u)
=
\Gamma S(v)
-
\int_0^s D_t\Gamma\,v_t\,dt,
\end{equation*}
where \(S(v)=\int_0^s v_t\,dW_t\). The term \(D_t\Gamma\) is computed pathwise using the identity
\begin{equation}
L'''(X_t)\,dt = dL'(X_t)-L''(X_t)\,dX_t,
\end{equation}
so that no third derivative needs to be computed.

\subsection{Monte Carlo gradient estimator.}
For a fixed target location \(\alpha\), we estimate the numerator and denominator of the Malliavin ratio formula \eqref{eq:mall_reform} by Monte Carlo averaging over the simulated paths:
\begin{align*}
\widehat N(\alpha)
&=
\frac{1}{N}\sum_{i=1}^N
\mathbf 1_{\{X_s^{(i)}>\alpha\}}
\biggl(
L'(X_s^{(i)})S(u^{(i)})
\\&\qquad \qquad\qquad-
\left\langle D(L'(X_s^{(i)})),u^{(i)}\right\rangle
\biggr),\\
\widehat D(\alpha)
&=
\frac{1}{N}\sum_{i=1}^N
\mathbf 1_{\{X_s^{(i)}>\alpha\}}S(u^{(i)}),
\end{align*}
with $\left\langle D(L'(X_s)),u\right\rangle
=
\int_0^s L''(X_s)D_tX_su_t\,dt.$
The resulting gradient estimate is
\begin{equation}
\label{eq:grad_est}
\widehat{L'}(\alpha)=\frac{\widehat N(\alpha)}{\widehat D(\alpha)}.
\end{equation}

Now we validate the numerical evaluation of \eqref{eq:grad_est}. We first evaluate \(\widehat{L'}(\alpha)\) on a grid of target points \(\alpha\in[-1,1]\) and compare it to the true gradient $
L'(\alpha)=\alpha^3+\alpha$ \eqref{eq:L}. 
This provides a direct check of the accuracy of the Malliavin reformulation before embedding it in the slower passive Langevin chain. Figure~\ref{fig:grad_estimation_accuracy} compares this estimated gradient \(\widehat{L'}(\alpha)\) against the ground truth \(L'(\alpha)\) for $N=5000$ simulated Monte Carlo paths. We see that there is a slight bias across $\alpha$, due to the quotient form of the Malliavin gradient estimator \eqref{eq:mall_reform}, but that otherwise the estimate has small error and tracks the true gradient across the domain.

\subsection{MCMC Recovery of $L(\cdot)$}
\label{sec:outer_langevin_chain}

Having obtained pointwise estimates of the gradient \(L'(\alpha)\) from the Malliavin conditional expectation formula, we next embed these estimates into the slow-time-scale inverse-learning chain proposed in Algorithm~\ref{alg:mall_IRL}:
\begin{equation}
\alpha_{k+1}
=
\alpha_k - \eta\,\widehat{L'}(\alpha_k)
+
\sqrt{2\eta\beta^{-1}}\,\xi_k,
\quad
\xi_k \sim \mathcal N(0,1),
\label{eq:outer_langevin_discrete}
\end{equation}
where \(\eta>0\) is the outer step size and \(\widehat{L'}(\alpha_k)\) is the Monte Carlo Malliavin estimator \eqref{eq:grad_est} obtained from the forward trajectories at the current target location \(\alpha_k\). 

\begin{figure}[t]
    \centering
     \includegraphics[width=0.7\linewidth]{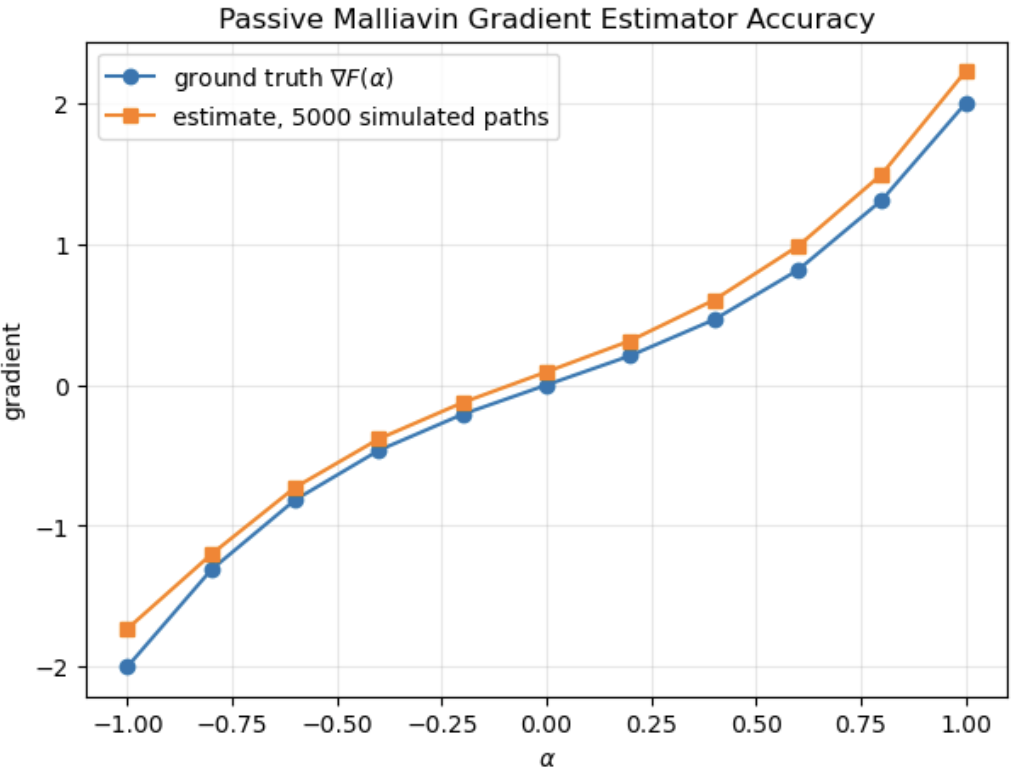}
    \caption{Estimated gradient \(\widehat{ L'}(\alpha)\) versus the true gradient \(L'(\alpha)\) over a grid of target locations \(\alpha\), using  Monte Carlo estimation with sample size \(N=5000\).}
    \label{fig:grad_estimation_accuracy}
\end{figure}

\begin{figure}
    \centering
     \includegraphics[width=1.0\linewidth]{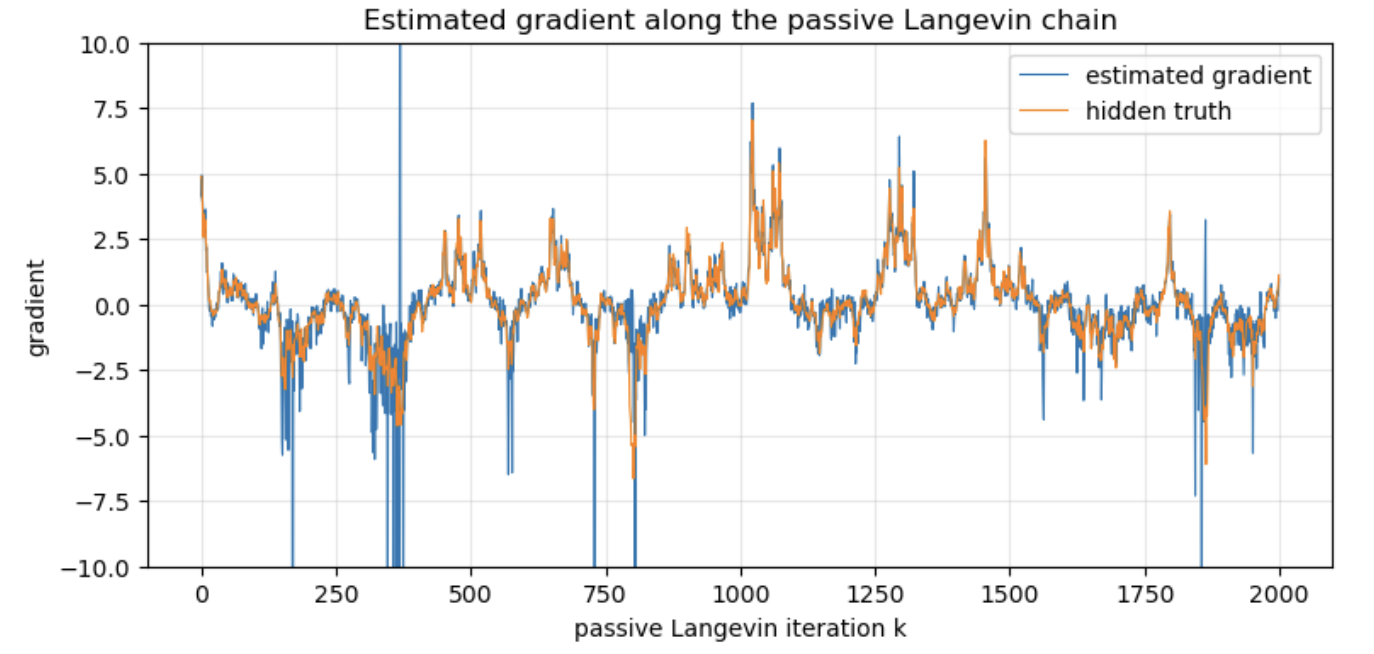}
    \caption{Sequential gradient estimates $\widehat{\Loss'}(\alpha)$ versus the true gradient $\Loss'(\alpha)$ across iterations of the outer passive Langevin chain \eqref{eq:outer_langevin_discrete}.Occasional outliers appear, yet the overall procedure recovers the target Gibbs distribution and hidden loss function accurately (see Figure~\ref{fig:outer_chain_gibbs} and \ref{fig:L_reconstruction})}
    \label{fig:outer_chain_grads}
\end{figure}

\begin{figure}
    \centering
     \includegraphics[width=0.75\linewidth]{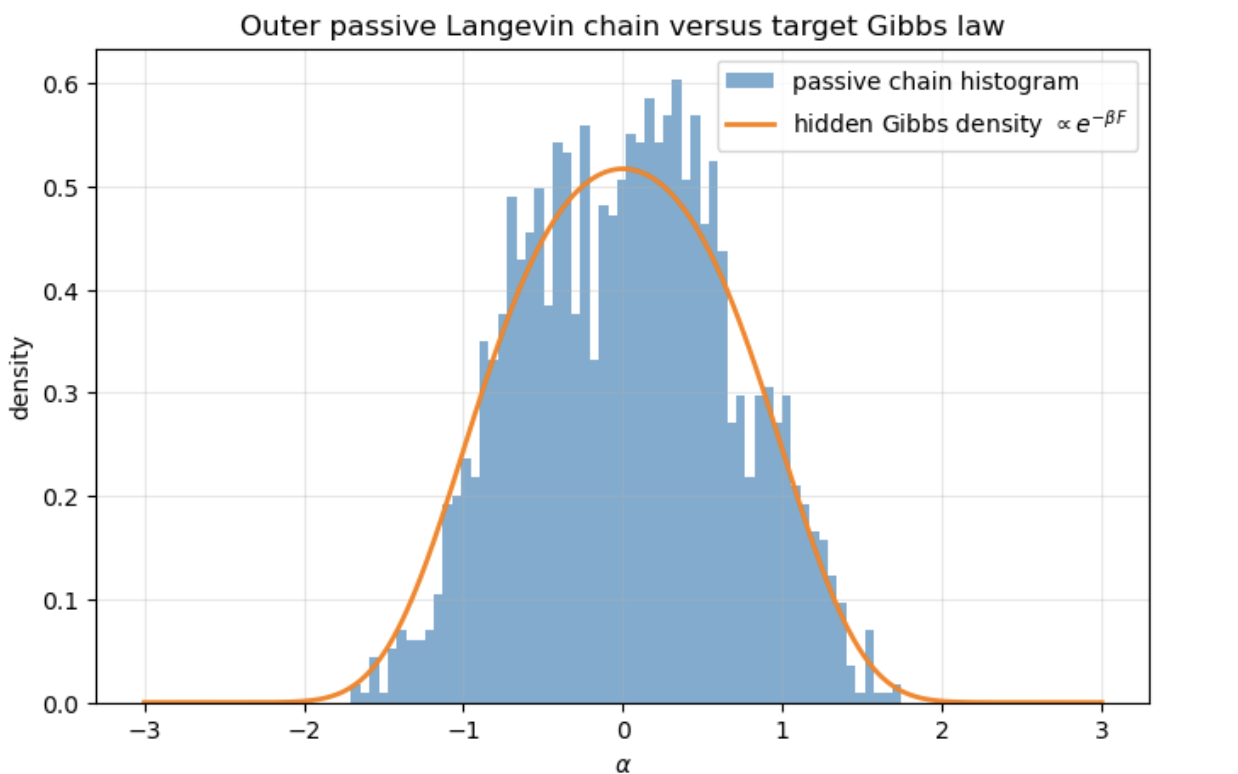}
    \caption{Empirical histogram of the stationary distribution of the Langevin chain \(\{\alpha_k\}\) ($2000$ MCMC steps after a $300$-step burn-in), compared with the target Gibbs density \(\pi_\beta(\alpha)\propto e^{-\beta L(\alpha)}\).}
    \label{fig:outer_chain_gibbs}
\end{figure}

\begin{figure}
    \centering
     \includegraphics[width=0.75\linewidth]{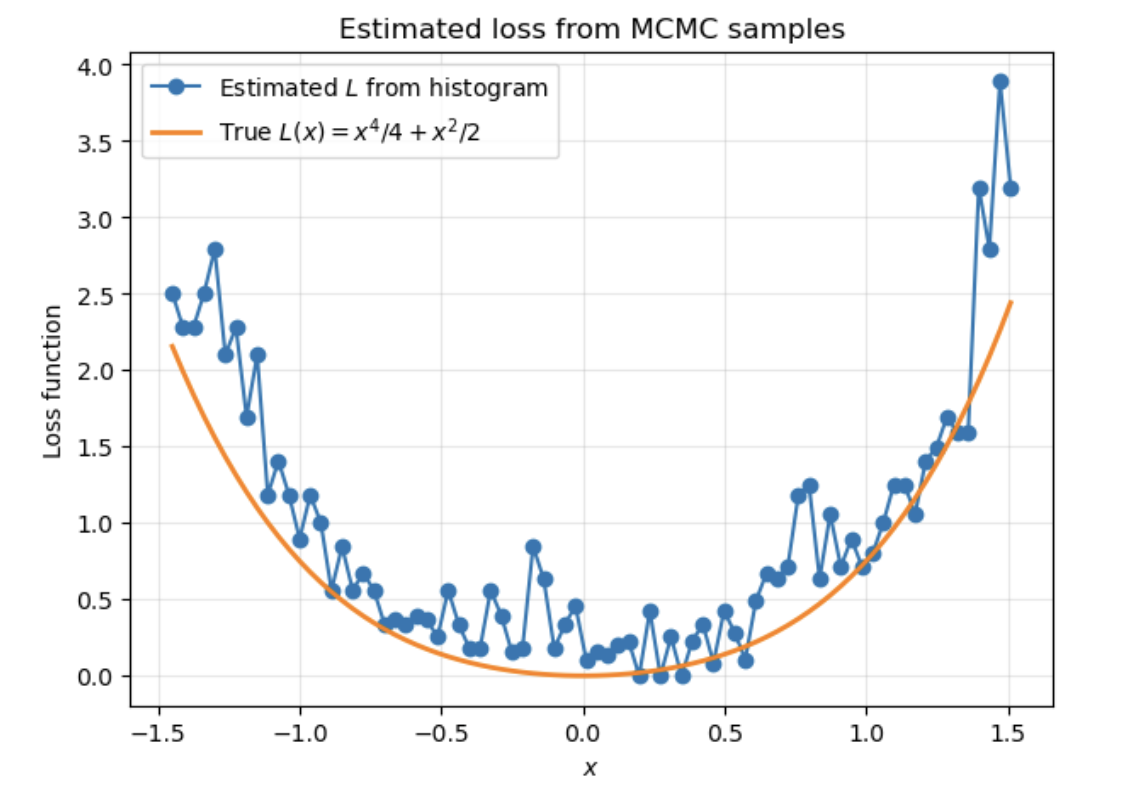}
    \caption{Adaptive IRL for reconstructing the loss function $\Loss(\cdot)$ from MCMC samples of $\exp(-\beta\Loss(\cdot)$. Specifically, $L(\cdot) = -\frac{1}{\beta}\log(\hat{\pi}(\cdot))$, where $\hat{\pi}(\cdot)$ is the empirical MCMC distribution in Fig.~\ref{fig:outer_chain_gibbs}.}
    \label{fig:L_reconstruction}
\end{figure}
We initialize the outer chain at $\alpha_0\in\mathbb R$, evolve
\eqref{eq:outer_langevin_discrete} for $2000$ iterations, discard the first
$300$ samples as burn-in, and retain the remaining iterates
$\{\alpha_k\}$ as approximate MCMC samples from the stationary law of the
inverse dynamics.

Figure~\ref{fig:outer_chain_grads} compares the estimated gradient
$\widehat{L'}(\alpha_k)$ with the true gradient $L'(\alpha_k)$ along the outer
Langevin trajectory. We observe that, for the majority of iterations, the
Malliavin estimator tracks the true derivative closely, with only occasional
outliers.

We next compare the empirical histogram of the retained samples with the target
Gibbs density
\begin{equation}
\label{eq:gibbs_density_explicit}
\pi_\beta(\alpha)\propto Z_\beta^{-1}\exp(-\beta L(\alpha)),
\end{equation}
where $L$ is given by \eqref{eq:L}. Figure~\ref{fig:outer_chain_gibbs} shows
that the empirical stationary distribution of the outer chain closely matches
the target Gibbs law. Finally, Figure~\ref{fig:L_reconstruction} displays the
loss reconstructed from the empirical histogram via
\[
\widehat L(\alpha) = -\beta^{-1}\log \widehat \pi(\alpha),
\]
up to an additive constant. The reconstructed potential agrees closely with the
true loss, confirming that the proposed passive learning procedure successfully
recovers the underlying objective from observed stochastic gradient dynamics.

\section{Conclusion}
We developed a Malliavin calculus-based framework for adaptive inverse reinforcement learning. The key tool is a reformulation of the counterfactual gradient required for passive IRL as a ratio of unconditional expectations. This avoids the measure-zero conditioning problem that plagues direct Monte Carlo and kernel-based methods. For Langevin forward dynamics, the requisite Malliavin derivative and Skorohod integral terms admit explicit expressions in terms of observed trajectories, producing a practical estimator that depends only on first- and second-order pathwise derivatives. Embedding this estimator within an outer Langevin chain yields a fully passive learning algorithm whose stationary distribution recovers the unknown loss. Numerical experiments confirm accurate gradient estimation, faithful recovery of the target Gibbs distribution, and effective reconstruction of the hidden loss function. These results establish Malliavin-based counterfactual estimation as an efficient and scalable alternative to kernel-based approaches for passive IRL.

{\bf Note}. The numerical results are fully reproducible with open-source python code  available at \texttt{https://github.com/LukeSnow0/}.

\bibliographystyle{IEEEtran}
\bibliography{vkm.bib}

\section{Appendix}

\subsection{Supporting Lemmas}

\begin{lemma}
\label{lem:F3}
$\Loss'''(X_t)dt = d\Loss'(X_t) - \Loss''(X_t)dX_t$
\end{lemma}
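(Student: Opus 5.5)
The plan is to apply It\^o's formula to the $C^1$-in-time-free function $f=\Loss'$ evaluated along the diffusion \eqref{eq:langevin_recall}. I assume $\Loss\in C^3$, so that $\Loss'\in C^2$ and It\^o's lemma applies. It gives
\begin{equation*}
d\Loss'(X_t) = \Loss''(X_t)\,dX_t + \tfrac12 \Loss'''(X_t)\,d\langle X\rangle_t .
\end{equation*}
The remaining work is to identify the quadratic variation of $X$ and rearrange.

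First, the drift term $-\Loss'(X_t)\,dt$ has finite variation and contributes nothing to the quadratic variation. The martingale part $\sqrt{2}\,dW_t$ contributes $d\langle X\rangle_t = 2\,dt$. Hence the second-order term equals $\tfrac12\cdot 2\,\Loss'''(X_t)\,dt = \Loss'''(X_t)\,dt$. Substituting this and rearranging gives
\begin{equation*}
\Loss'''(X_t)\,dt = d\Loss'(X_t) - \Loss''(X_t)\,dX_t .
\end{equation*}
This is the claimed identity, understood in integrated form: for all $0\le a\le b$,
\begin{equation*}
\int_a^b \Loss'''(X_u)\,du = \Loss'(X_b)-\Loss'(X_a)-\int_a^b \Loss''(X_u)\,dX_u .
\end{equation*}
Here the last integral is an It\^o integral. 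It is therefore computable from the observed quantities \eqref{eq:X_sps}, as a limit of left-point Riemann sums $\sum \Loss''(X_{t_j})(X_{t_{j+1}}-X_{t_j})$.

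The only point needing care is that the stochastic integral must be interpreted in the It\^o (not Stratonovich) sense. With a Stratonovich integral the correction term would vanish and the identity would fail. I would state this explicitly, since Lemma~\ref{lem:skor} and the numerical scheme rely on forward (left-point) discretization.

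Integrability is the only technical step. $\Loss''(X_u)$ is continuous and adapted along continuous paths, so it is locally bounded, and the It\^o integral is well defined on every finite interval. No further obstacle is expected.
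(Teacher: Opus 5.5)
Your proof is correct and follows the paper's route: both apply It\^o's formula to $\Loss'(X_t)$ with $d\langle X\rangle_t = 2\,dt$. The paper expands in $dW_t$ and then substitutes $\sqrt{2}\,dW_t = dX_t + \Loss'(X_t)\,dt$, whereas you keep the formula in $dX_t$ form from the start, which is the same computation with one fewer step; your remarks on needing $\Loss\in C^3$ and on the It\^o (left-point) interpretation are useful points the paper leaves implicit.
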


\subsection{Proofs}
\subsubsection{Proof of Lemma~\ref{lem:DXs}}
\label{sec:pf_DXs}
Let $Y_s:=\nabla_x X_s$. Differentiating \eqref{eq:langevin_recall} with respect
to the initial condition gives
\[
Y_s = 1-\int_0^s \nabla^2\Loss(X_u)Y_u\,du,
\]
or equivalently
\[
\frac{d}{ds}Y_s=-\nabla^2\Loss(X_s)Y_s,
\qquad Y_0=1.
\]
Hence
\[
Y_s=\exp\!\left(-\int_0^s \nabla^2\Loss(X_u)\,du\right).
\]

\subsubsection{Proof of Lemma~\ref{lem:F3}}

It\'o's Lemma tells us that 
\[d\Loss'(X_t) = (\Loss'''(X_t) - \Loss'(X_t)\Loss''(X_t))dt + \sqrt{2}\Loss''(X_t)dW_t\]
and furthermore we may write $dW_t$ as 
\[dW_t = (dX_t + \Loss'(X_t)dt)/\sqrt{2}\] by definition of the forward Langevin dynamics. Thus, substituting and rearranging terms gives us
\[\Loss'''(X_t)dt = d\Loss'(X_t) - \Loss''(X_t)dX_t\]

\subsubsection{Proof of Lemma~\ref{lem:skor}}
\label{sec:pf_skor}

By Lemma~\ref{lem:F3}, we may write out $D_t\Gamma$ as 
\begin{align*}
D_t\Gamma &= \Gamma\int_0^sD_tX_u[d\Loss'(X_u) - \Loss''(X_u)dX_u] \\
& = \sqrt{2}\Gamma\int_0^s\exp\left(-\int_t^u\Loss''(X_\gamma)d\gamma\right)[d\Loss'(X_u) \\
&\qquad \qquad - \Loss''(X_u)dX_u]
\end{align*}
Thus, 
\begin{align}
\begin{split}
\label{eq:skor_expression}
&\skor(u) = \Gamma\skor(\tilde{u}) - \int_0^TD_t\Gamma\cdot\tilde{u}_tdt \\
&= \exp\left(\int_0^s\Loss''(X_u)du\right)\cdot\\&\qquad \int_0^T\frac{1}{\sqrt{2}T}\exp\left(-\int_0^t\Loss''(X_u)du\right)dW_t \\
& \quad  - \int_0^T\Gamma\int_0^s\exp\left(-\int_t^u\Loss''(X_\gamma)d\gamma\right)[d\Loss'(X_u) \\&\qquad - \Loss''(X_u)dX_u]
 \cdot\frac{1}{T}\exp\left(-\int_0^t\Loss''(X_u)du\right)dt
\end{split}
\end{align}

\subsubsection{Proof of Theorem~\ref{thm:convg}}
\label{sec:thmpf}
The main idea is as follows. By the law of large numbers, the empirical
numerator and denominator in Algorithm~\ref{alg:mall_IRL} converge almost surely
to their population counterparts, so the Malliavin estimator is consistent.
Substituting this estimator into the outer update yields an Euler--Maruyama
scheme,
\[
\alpha_{k+1}
=
\alpha_k-\eta\,\widehat{\nabla L}_N(\alpha_k)
+\sqrt{2\beta^{-1}\eta}\,\xi_{k+1},
\,\,\, \xi_{k+1}\sim N(0,1),
\]
whose continuous-time interpolation converges weakly, as $N\to\infty$ and
$\eta\to0$, to \eqref{eq:inv_langevin} by standard stochastic approximation
results \cite{KY03}. The stationary Fokker--Planck equation for
\eqref{eq:inv_langevin} is
\[
0=\frac{d}{d\alpha}\!\bigl(L'(\alpha)\rho(\alpha)\bigr)
+\beta^{-1}\frac{d^2}{d\alpha^2}\rho(\alpha),
\]
and direct substitution shows that
\[
\rho(\alpha)=Z_\beta^{-1}e^{-\beta L(\alpha)}
\]
is a solution. The confining assumption \eqref{eq:confining} ensures
normalizability, so this defines the invariant measure $\pi_\infty$. Hence the
limiting diffusion converges to $\pi_\infty$, and therefore the empirical law
of the outer chain converges to $\pi_\infty$ as $k\to\infty$.

\medskip
\noindent\textbf{Finite-sample perspective.}
A corresponding finite-sample analysis can follow the techniques in
\cite{SK25,RRT17a}. Writing $\mu_k:=\CL(\alpha_k)$ for the discrete outer
chain law, $\nu_{k\eta}:=\CL(\alpha(k\eta))$ for the limiting diffusion law, and
$\pi_\infty$ for its Gibbs stationary law, one decomposes the 2-Wasserstein distance
\[
W_2(\mu_k,\pi_\infty)
\le
W_2(\mu_k,\nu_{k\eta})+W_2(\nu_{k\eta},\pi_\infty).
\]
The first term is the diffusion-approximation error, controlled by comparing the
Euler scheme to the limiting diffusion; the second is the diffusion-convergence
error, controlled by ergodicity of the limiting diffusion, e.g.\ via
log-Sobolev and entropy-decay arguments. In the asymptotic regime these two
terms vanish, yielding $\mu_k\Rightarrow\pi_\infty$ and hence
\[
L(\alpha)\propto -\beta^{-1}\log \pi_\infty(\alpha).
\]

\end{document}